\documentclass[11pt,letterpaper]{article}

\usepackage{amsmath,amssymb,amsthm}
\usepackage{mathtools}
\usepackage{hyperref}
\usepackage[margin=1in]{geometry}
\usepackage{booktabs}
\usepackage{enumitem}
\usepackage{graphicx}
\usepackage{xcolor}
\usepackage{lineno}

\newtheorem{theorem}{Theorem}[section]
\newtheorem{lemma}[theorem]{Lemma}
\newtheorem{proposition}[theorem]{Proposition}

\theoremstyle{definition}
\newtheorem{definition}[theorem]{Definition}
\newtheorem{observation}[theorem]{Observation}
\theoremstyle{remark}
\newtheorem{remark}[theorem]{Remark}

\DeclareMathOperator{\rank}{rank}

\DeclareMathOperator{\IPR}{IPR}
\newcommand{\R}{\mathbb{R}}

\title{On the Invariants of Softmax Attention}
\author{Wonsuk Lee\\
Seoul National University\ \& SK Hynix America\\
\texttt{\small wonsuk.lee@snu.ac.kr}}
\date{April 5, 2026}

\begin{document}
\maketitle

\begin{abstract}
Softmax attention maps every query--key interaction into a
probability distribution, but the underlying structure remains
largely unexplored.  We define the \emph{energy field}, the
row-centered attention logit, and show that it exhibits invariant
properties across models, architectures, and inputs.

Two classes of invariants emerge. 
\emph{Mechanism-level} invariants follow from the algebraic structure 
of softmax attention. They include a per-row zero-sum constraint, 
a rank bound determined by the head dimension, and spectral signatures 
that follow from them. 
\emph{Model-level} regularities are not required by the mechanism,
yet hold in every autoregressive language model we test,
spanning several architecture families.  The energy field distributes
its variance over key positions without concentrating at a few.  
This delocalization traces to a property of the key matrix we call 
\emph{key incoherence}.

These invariants have practical consequences.  The rank bound
confines the energy field to a low-dimensional subspace.  Key
incoherence yields a per-head training monitor.  All results are
verified at multiple context lengths and input texts.
\end{abstract}

\section{Introduction}
\label{sec:introduction}

The softmax attention mechanism~\cite{vaswani2017attention} computes
a compatibility score between every pair of query and key vectors,
then normalizes these scores into a probability distribution that
routes information between tokens.  This mechanism is shared by all transformer language models, from
GPT~\cite{radford2019language} to
LLaMA~\cite{touvron2023llama} and
Mistral~\cite{jiang2023mistral}.

Despite its engineering success, why dot-product softmax attention
works so well in transformer language models is not understood.
Theoretical work has studied expressiveness and approximation
power~\cite{vaswani2017attention}, efficient
alternatives~\cite{katharopoulos2020transformers,
choromanski2021rethinking}, and empirical patterns in the attention
weights~\cite{xiao2024efficient}. But what this simple mathematical 
structure produces when applied to language has not been characterized.
Language has no known governing law or theory from which the
effectiveness of attention can be derived.  If structure exists 
in the attention computation, it must emerge from the interaction 
between the softmax mechanism and the data it processes.  
This paper searches for invariants of that interaction, 
as a step toward understanding what softmax attention computes 
beyond what it achieves.

To find these invariants, we define the \emph{energy field} of a
single attention head, the row-centered attention logit.  It
transforms the sparse, bounded attention probability $p_{ij}$
into a representation that reveals what $p_{ij}$ hides
(Figure~\ref{fig:field-gallery}). 
Its properties divide into two categories.

\textbf{Mechanism-level properties} follow from softmax
normalization and the bilinear form
$Z_{ij} = q_i^\mathrm{T} k_j / \sqrt{d_h}$, where $d_h$ is the
head dimension.  The row-sum identity ($\sum_j E_{ij} = 0$) produces an exact
autocovariance bridge in the flattened energy signal. The rank bound ($\le d_h + 1$) confines the energy field to a
low-dimensional subspace.  Twenty SVD components capture 
over 90\% of variance in the energy field at all context
lengths tested.

\textbf{Model-level regularities} emerge from training on
autoregressive language modeling.  They are not required by the
mechanism, yet they hold in every model we test.  The central
finding is \emph{key incoherence}: the key norm concentration
ratio $\mu_K$ stays near 1.5 in all 16 models spanning seven
architecture families.  There is no obvious reason for this
agreement.  The models differ in size, training data, and position
encoding, but the regularity persists.  It implies that the
singular vectors of the energy field delocalize, and that the
field concentrates its variance in far fewer dimensions than the
rank bound allows.

Together, these properties characterize the intrinsic structure
of softmax attention.

We verify all claims across 16 models spanning seven architecture
families (124M--7B parameters), five context lengths ($L = 64$
to $1{,}024$), and five input texts from Project Gutenberg,
totaling 400 measurements.  Every quantity is stable over input texts.

\section{Related Work}
\label{sec:related}

\paragraph{Attention structure and patterns.}
Much prior work on the attention
mechanism~\cite{vaswani2017attention} focuses on the attention
probability $p_{ij}$ rather than the underlying logit structure.
Xiao et al.~\cite{xiao2024efficient} identified ``attention sinks,''
initial tokens that absorb disproportionate attention probability,
and exploited them for efficient streaming inference.  They did not
formalize the structure behind them.  The energy field 
(Definition~\ref{def:energy}) provides that
structure.  Attention sinks and other recurring patterns become measurable
features of the energy field (Section~\ref{sec:field}).

\paragraph{Attention approximations.}
Linear attention~\cite{katharopoulos2020transformers} and the
Performer~\cite{choromanski2021rethinking} replace softmax with
kernel-based approximations to reduce the quadratic cost of
attention.  These approximations break the exact row-sum
cancellation that softmax enforces, a difference visible in the
vanishing DC property (Section~\ref{sec:dwt}).

\paragraph{Matrix incoherence.}
The key incoherence parameter $\mu_K$ measures how uniformly the
key norms spread across positions.  The same quantity appears in
matrix completion~\cite{candes2009exact}, where low-rank recovery
requires the analogous condition $\mu = O(1)$.  Our finding that
trained transformers satisfy this condition connects softmax
attention theory to a well-studied mathematical framework.

\section{The Energy Field}
\label{sec:field}

\subsection{Definitions}

A single attention head~\cite{vaswani2017attention} projects each
token $x_i \in \R^{d_{\text{model}}}$ into a query $q_i = W_Q x_i$ and a key
$k_j = W_K x_j$ in $\R^{d_h}$, where $d_h$ is the head dimension.
The attention logit $Z_{ij} = q_i^\mathrm{T} k_j / \sqrt{d_h}$ gives, in
matrix form, $Z = QK^\mathrm{T} / \sqrt{d_h}$ with
$Q, K \in \R^{L \times d_h}$.  In autoregressive models,
position~$i$ attends only to positions $j \le i$, giving a causal
context of size $n_i = i + 1$.  The attention probability is
$p_{ij} = \mathrm{softmax}_j(Z_{ij}) = e^{Z_{ij}} / \sum_{j' \le i} e^{Z_{ij'}}$,
where $j$ in $\mathrm{softmax}_j$ indicates normalization over key
positions.

The attention probability $p_{ij}$ is sparse, bounded, and entangled 
with a nonlinear partition function. The logit $Z_{ij}$ is unbounded 
and linear but carries a position-dependent baseline. 
A query with large norm produces high dot products with all keys, 
not just the relevant ones. 
This baseline skews the interaction pattern. Row centering removes it.

\begin{definition}[Energy field]\label{def:energy}
The \emph{energy field} is the row-centered logit:
\begin{equation}\label{eq:energy}
E_{ij} = Z_{ij} - \mu_i,
\qquad \mu_i = \frac{1}{n_i}\sum_{j'=0}^{n_i-1} Z_{ij'},
\end{equation}
where $n_i = i + 1$ is the causal context size at position~$i$.
By construction, $\sum_{j=0}^{n_i-1} E_{ij} = 0$ for every row~$i$,
the \emph{row-sum identity}.
\end{definition}

The transformation is invertible.  Softmax is shift-invariant, so
$p_{ij} = \mathrm{softmax}_j(E_{ij})$.  The energy field carries
the information needed to compute the attention probability.

\begin{remark}[Log-partition cancellation]\label{rem:logpartition}
Since $\log p_{ij} = Z_{ij} - \log \mathcal{Z}_i$, the partition
function cancels in mean centering, giving
$E_{ij} = \log p_{ij} - (1/n_i)\sum_{j'}\log p_{ij'}$.
The energy field is both a geometric quantity (centered dot
products in embedding space) and a probabilistic one
(mean-centered log-probability in attention space).
\end{remark}

\begin{definition}[Row-centered logit matrix]\label{def:extended}
The \emph{row-centered logit matrix} $\tilde{E} \in \R^{L \times L}$
is obtained by centering each row of the full logit matrix $Z$ by
its mean over all $L$ key positions:
\begin{equation}\label{eq:extended}
\tilde{E}_{ij} = Z_{ij} - \bar{\mu}_i,
\qquad \bar{\mu}_i = \frac{1}{L}\sum_{j'=0}^{L-1} Z_{ij'}.
\end{equation}
By construction, $\sum_{j=0}^{L-1} \tilde{E}_{ij} = 0$ for every
row~$i$.
\end{definition}

\begin{remark}[Relationship between $E$ and $\tilde{E}$]
\label{rem:E-vs-Etilde}
The causal energy field $E_{ij}$ and the row-centered logit matrix
$\tilde{E}_{ij}$ start from the same logit $Z_{ij}$ but subtract
different row means.  $E_{ij}$ subtracts the causal mean
$\mu_i = (1/n_i)\sum_{j'=0}^{i} Z_{ij'}$, the average over the
$n_i = i+1$ positions the model sees.  $\tilde{E}_{ij}$ subtracts
the full mean $\bar{\mu}_i = (1/L)\sum_{j'=0}^{L-1} Z_{ij'}$, the
average over all $L$ positions.  

The logit $Z_{ij}$ is defined for all $(i,j)$ pairs, including
positions $j > i$ that the causal mask excludes during inference.  
$\tilde{E}_{ij}$ therefore spans the full $L \times L$ matrix, 
while $E_{ij}$ is defined only on the causal triangle ($j \le i$). 
On that triangle, the two differ by a row-dependent constant
$E_{ij} = \tilde{E}_{ij} + (\bar{\mu}_i - \mu_i)$, which shifts
each row uniformly but does not affect within-row correlations or
the rank structure.

The two objects serve different purposes.  The causal energy field
$E_{ij}$ is the \emph{physical} quantity that determines what
softmax produces and drives the spectral signatures through the
causal row-sum $\sum_{j=0}^{i} E_{ij} = 0$.  
The row-centered logit $\tilde{E}$ is the \emph{algebraic} tool,
with clean properties that support the SVD decomposition and
delocalization analysis.
\end{remark}

\subsection{Elementary properties}

Two mechanism-level properties hold for any weights and any input.
The \textbf{row-sum identity} $\sum_j E_{ij} = 0$ ensures that
attention is a zero-sum competition among visible positions.  Its
spectral consequences are developed in
Section~\ref{sec:signatures}.

The \textbf{rank bound} $\rank(\tilde{E}) \le d_h + 1$ follows
because $Z = QK^\mathrm{T} / \sqrt{d_h}$ has rank at most $d_h$, 
and row centering subtracts the rank-one matrix
$\bar{\mu}\,\mathbf{1}^\mathrm{T}$.  By rank subadditivity,
$\rank(\tilde{E}) \le d_h + 1$.

\subsection{A first look}

\begin{figure}[t]
  \centering
  \includegraphics[width=\textwidth]{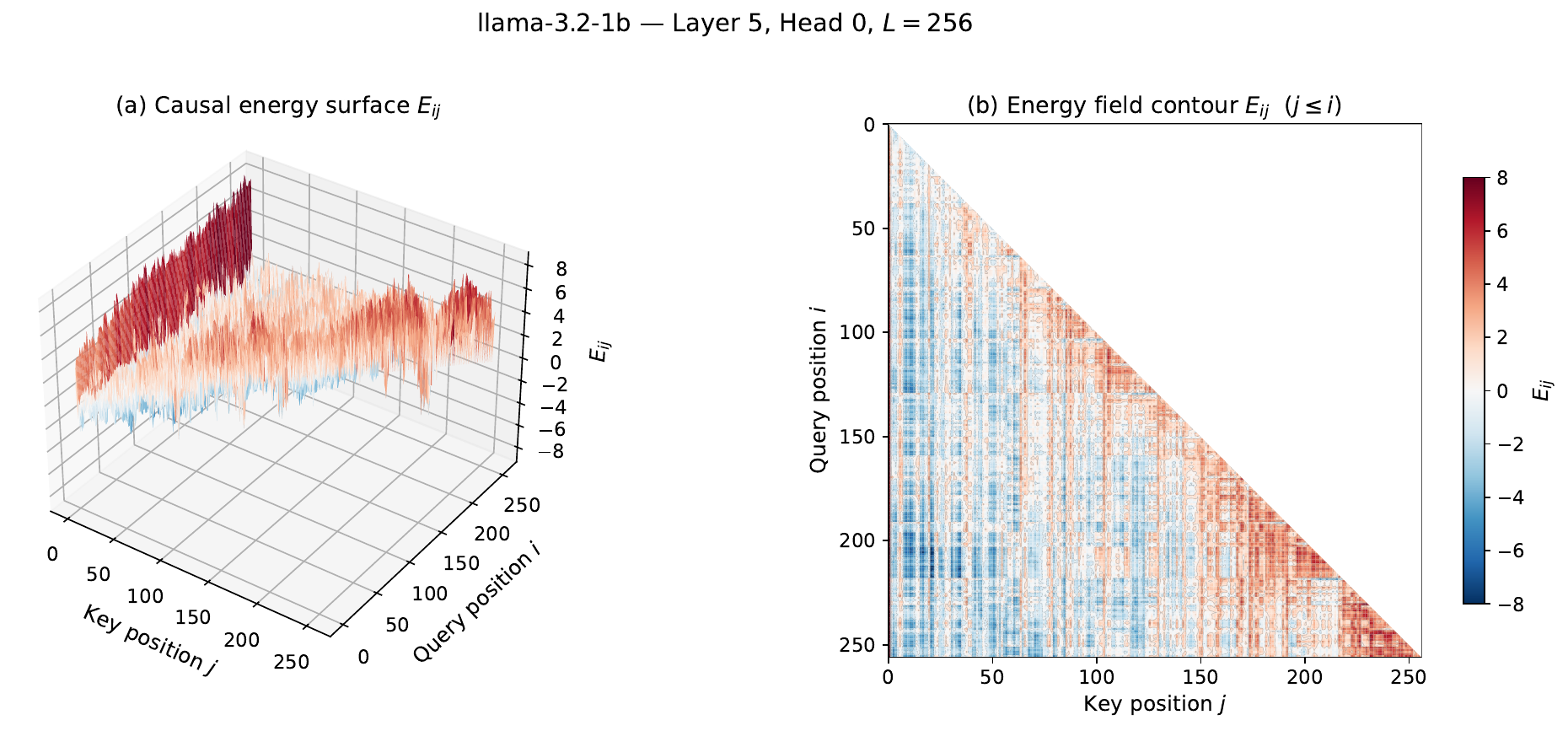}
  \caption{The causal energy field $E_{ij}$ of a single LLaMA-3.2-1B
    head (layer~5, head~0, $L = 256$, processing Dickens).
    \textbf{(a)}~Three-dimensional surface over the causal region
    ($j \le i$).  Ridges (red) mark high energy, valleys
    (blue) mark low energy.  Every row sums to zero, so red and blue
    balance exactly.  The acausal region ($j > i$) is masked.
    \textbf{(b)}~Filled contour plot of the causal energy field,
    with iso-energy lines.  Vertical stripes appear where key
    vectors produce consistently high or low energy for all queries.
    The triangular domain grows with $i$, reflecting the expanding
    causal context.
    Two features dominate.  The diagonal ridge ($j = i$) carries
    mean energy $E_{ii} \approx 1.5$, reflecting self-attention
    since $q_i$ and $k_i$ project from the same embedding.  The red
    band at $j = 0$ is far stronger, with mean
    $E_{i,0} \approx 6.1$.  This is the BOS attention
    sink~\cite{xiao2024efficient}, visible as a column of uniformly
    high energy across all query positions.}

  \label{fig:field-gallery}
\end{figure}

Figure~\ref{fig:field-gallery} shows $E_{ij}$ for a single
LLaMA-3.2-1B head at $L = 256$.  The features are representative
of all models tested
(Sections~\ref{sec:incoherence}--\ref{sec:delocalization}).  Two
patterns dominate: the diagonal ridge ($E_{ii} \approx 1.5$,
self-attention from shared embeddings) and the vertical band at
$j = 0$ ($E_{i,0} \approx 6.1$, the attention
sink~\cite{xiao2024efficient}). Vertical stripes at other positions
reflect key vectors that produce consistently high or low 
energy across all queries (Section~\ref{sec:svd}).

\subsection{The flattened signal}

\begin{definition}[Flattened signal]\label{def:flatten}
We construct the flattened signal by reading the causal energy field
row by row, starting from row~1 (row~0 contains a single zero
entry), to form a one-dimensional signal
\begin{equation}\label{eq:flatten}
\mathcal{E}_1, \mathcal{E}_2, \ldots, \mathcal{E}_N,
\qquad N = \frac{L(L+1)}{2} - 1 \approx \frac{L^2}{2}.
\end{equation}
Each position $t$ maps to a unique row--column pair $(i(t), j(t))$,
with $\mathcal{E}_t = E_{i(t),\, j(t)}$.
\end{definition}

This ordering follows the causal structure of the attention
mechanism.  Consecutive entries within each row are nearby
key positions, and row boundaries mark transitions to new queries.
Since every row sums to zero, the global sum vanishes as
$\sum_{t=1}^{N} \mathcal{E}_t = 0$.  The spectral properties
derived in Section~\ref{sec:signatures} hold for any row-by-row
ordering.

\section{SVD Channel Decomposition}
\label{sec:svd}

The rank bound gives at most $r \le d_h + 1$ nonzero singular
values.  The SVD of $\tilde{E}$,
\begin{equation}\label{eq:svd}
\tilde{E} = \sum_{k=1}^{r} \sigma_k \, u_k \, v_k^\mathrm{T},
\end{equation}
decomposes the energy field into $r$ channels, where $u_k \in \R^L$
are query profiles and $v_k \in \R^L$ are key profiles.

On the causal triangle ($j \le i$), the energy field $E_{ij}$
differs from $\tilde{E}_{ij}$ by the row-dependent constant
$\bar{\mu}_i - \mu_i$ as in Remark~\ref{rem:E-vs-Etilde}, hence
\begin{equation}\label{eq:svd-causal}
E_{ij} = \sum_{k=1}^{r} \sigma_k \, (u_k)_i \, (v_k)_j
  + (\bar{\mu}_i - \mu_i),
\qquad j \le i.
\end{equation}
Each channel $k$ pairs a query profile $u_k$ with a key profile
$v_k$.  Within row~$i$, the energy pattern is a weighted sum of key
profiles with weights $\sigma_k \cdot (u_k)_i$.  The key profiles are
shared across all rows, producing the vertical stripes visible in
Figure~\ref{fig:field-gallery}(b).

\paragraph{Rapid singular value decay.}
The rank bound is loose.  The top 20 singular values of $\tilde{E}$
capture 98\% of total variance at $L = 256$
(Table~\ref{tab:fidelity}), decreasing to 96\% at
$L = 512$--$1{,}024$ as longer sequences contain richer structure
within the subspace.  Trained models concentrate variance in far
fewer components than the mechanism allows.

\paragraph{Channel signals and autocovariance.}
The channel decomposition bounds the correlation structure of the
flattened signal.  This drives the spectral analysis of
Section~\ref{sec:signatures}.  To see how, we define the
\emph{per-channel signal} at position~$t$ using the flattened signal
of Definition~\ref{def:flatten}:
\begin{equation}\label{eq:channel-signal}
s_k(t) \coloneqq \sigma_k \cdot (u_k)_{i(t)} \cdot (v_k)_{j(t)},
\end{equation}
then the full flattened signal is
$\mathcal{E}_t = \sum_{k=1}^{r} s_k(t)$.  The autocovariance of the
flattened signal decomposes into channel cross-covariances:
\begin{equation}\label{eq:channel-cov}
\hat{\gamma}(\tau)
= \frac{1}{N}\sum_{t=1}^{N-\tau}\mathcal{E}_t\,\mathcal{E}_{t+\tau}
= \sum_{k=1}^{r}\sum_{l=1}^{r}\Gamma_{kl}(\tau),
\end{equation}
where
$\Gamma_{kl}(\tau) = (1/N)\sum_{t=1}^{N-\tau} s_k(t)\,s_l(t+\tau)$.
There are at most $r^2 \le (d_h + 1)^2$ such terms.  A signal of
length $N \approx L^2/2$ has its correlation structure captured by a
fixed number of channel interactions determined by head dimension, not
context length. This bounded complexity makes the autocovariance 
decomposition tractable.

\section{Key Incoherence}
\label{sec:incoherence}

The SVD of Section~\ref{sec:svd} decomposes the energy field into
channels, but how are the singular vectors distributed spatially?
Do they spread across all key positions, or concentrate at a few?
The answer depends on the key matrix $K$.

\begin{definition}[Key incoherence]\label{def:mu_K}
For a key matrix $K \in \R^{L \times d_h}$ with rows $k_0, \ldots,
k_{L-1}$, the \emph{key incoherence parameter} is
\begin{equation}\label{eq:mu_K}
\mu_K = L \cdot \frac{\max_j \|k_j\|^2}{\|K\|_F^2},
\end{equation}
where $\|K\|_F^2 = \sum_{j=0}^{L-1} \|k_j\|^2$ is the squared
Frobenius norm.
\end{definition}

This parameter measures how uniformly the key norms are distributed
across positions.  When all keys have equal norm, $\mu_K = 1$.  When
a single key carries all the norm, $\mu_K = L$.  For a random matrix
with i.i.d.\ entries, $\mu_K \to 1$ as $L \to \infty$.

\subsection{Main empirical result}

\begin{observation}[Key incoherence of trained language models]
\label{obs:key_incoherence}
Across 5{,}888 key-value heads in 16 trained language models
(124M--7B parameters), seven architecture families (Pythia, GPT-2,
OPT, Qwen, LLaMA, Phi-2, Mistral), five context lengths
($L = 64$--$1{,}024$), and five input texts from Project Gutenberg:
\begin{equation}\label{eq:mu_K_result}
\mu_K = O(1).
\end{equation}
Mean $\mu_K = 1.5$, maximum $\mu_K = 26$ (a single OPT-350m head). 
Excluding this outlier, the maximum is $8.6$.
The quantity is invariant across text genres
with coefficient of variation below 3\%.
\end{observation}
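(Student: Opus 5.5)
This observation is empirical, so it cannot be derived from earlier results. What I can do is lay out a measurement protocol that establishes it, plus an analytic frame for why $\mu_K = O(1)$ is the right form of the claim. The only facts from earlier sections I will use are Definition~\ref{def:mu_K} and the elementary bounds $1 \le \mu_K \le L$.

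\textbf{Measurement.} The protocol is:
\begin{enumerate}[label=(\roman*)]
\item For each of the 16 models, each text, and each $L \in \{64,128,256,512,1024\}$, run a forward pass on the first $L$ tokens.
\item Hook the key projections after the positional transform. For rotary models this means after RoPE, since that is the $K$ entering $Z = QK^\mathrm{T}/\sqrt{d_h}$.
\item Compute $\mu_K$ exactly from Definition~\ref{def:mu_K} for every key--value head. Under grouped-query attention each shared key head is counted once, which is how one arrives at 5{,}888 heads rather than the number of query heads.
\end{enumerate}
The statement $\mu_K = O(1)$ is then operationalized in three parts:
\begin{itemize}
\item The per-head maximum of $\mu_K$ over all $L$ stays bounded rather than growing with $L$. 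A nonincoherent head would scale like $L$.
\item The distribution across heads has mean near $1.5$.
\item For each (model, head, $L$), the coefficient of variation across the five texts is small, which gives text invariance.
\end{itemize}

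\textbf{Analytic baseline.} To show that $\mu_K \approx 1$ is what an unstructured key matrix would give, I would argue as follows. For i.i.d.\ sub-Gaussian rows, $\|k_j\|^2$ concentrates around $d_h\sigma^2$ with fluctuations of order $\sqrt{d_h \log L}$ after a union bound over $j$. Meanwhile $\|K\|_F^2/L \to d_h\sigma^2$ by the law of large numbers. Together these give $\mu_K \le 1 + O(\sqrt{\log L / d_h})$. This supports the remark that random matrices have $\mu_K \to 1$, and it explains why values near $1.5$ are a mild, structured deviation rather than localization.

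\textbf{Main obstacle: outliers.} The hardest step is handling the attention sink. Figure~\ref{fig:field-gallery} shows that the BOS key can behave atypically, so a large $\|k_0\|$ could make $\mu_K$ grow. I would first check directly whether $\max_j\|k_j\|^2$ is attained at $j=0$. I would then test whether $\mu_K$ stays bounded as $L$ grows. If $\|k_0\|$ is fixed while the other keys have roughly constant norm, then $\mu_K \approx \|k_0\|^2/\overline{\|k\|^2}$ saturates, which is consistent with $O(1)$. I would isolate and report the single OPT-350m head at $26$ separately, verifying that it does not grow linearly in $L$. The bound of $8.6$ is quoted after excluding it.
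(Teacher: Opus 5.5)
Your measurement protocol matches the paper's support for this observation, which is purely empirical: per-head measurements summarized in Table~\ref{tab:mu_K} and Figure~\ref{fig:mu_K_vs_size}, with key--value heads counted once under GQA (your reading does reproduce the 5{,}888 count), and the OPT-350m head set aside. The paper attributes that head to OPT's documented training instabilities; it does not check its $L$-dependence. Two small corrections to the protocol. First, RoPE acts by orthogonal rotations, so $\|k_j\|$, and hence $\mu_K$, is the same before and after it; the hook placement does not matter. Second, the ``CV below 3\%'' is naturally read as an aggregate statistic (per-model means). A per-head CV of a max-based quantity is a stricter, text-sensitive test, and it can exceed 3\% without contradicting the claim.

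Where you depart from the paper is the analytic baseline, and there you misread your own bound. The paper anchors the value $1.5$ by noting that randomly initialized models already give $\mu_K \approx 1.5$ and that training preserves this. Your i.i.d.\ model, once you evaluate it, says the same thing. In the Gaussian case, $\|k_j\|^2/\sigma^2 \sim \chi^2_{d_h}$, and the maximum over $L$ rows is about $d_h + 2\sqrt{d_h \log L}$. So $\mu_K \approx 1 + 2\sqrt{\log L/d_h}$, which is roughly $1.5$--$1.7$ for $d_h = 64$ and $L = 64$--$1{,}024$. Thus $1.5$ is not a ``mild, structured deviation'' from an unstructured value near $1$; it \emph{is} the unstructured value at these sizes.

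Nor does your bound support $\mu_K \to 1$ as $L \to \infty$. At fixed $d_h$ it grows logarithmically in $L$, and the remark it is meant to support holds only if $\log L/d_h \to 0$. So your criterion that the per-head maximum ``stays bounded'' has to be read as ``no linear growth'', which is the distinction you draw in your first bullet.

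Your sink computation $\mu_K = LA/(A+(L-1)m) \to A/m$ is correct. It is sharper than the paper's heuristic that concentrating on a fixed position would make $\mu_K$ grow linearly in $L$. In a causal model $k_0$ does not depend on $L$, and under layer normalization every $\|k_j\|$ is bounded by a constant independent of $j$ and $L$. So for a fixed model $\mu_K$ can grow only if the average key norm collapses. Boundedness in $L$ is therefore close to automatic, and the real empirical content, in your plan and in the paper, is the size of the constant ($1.5$, $8.6$, $26$).

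Finally, the BOS ridge in Figure~\ref{fig:field-gallery} concerns $q_i^\mathrm{T} k_0$ relative to the row mean. That can come from direction rather than norm, so the sink need not register in $\mu_K$ at all, and it is less of an obstacle than you suggest.
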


Layer normalization controls $\|K\|_F^2 / L$, keeping the average
key norm at $O(d_h)$, but does not prevent $\max_j \|k_j\|^2$ from
growing with $L$.  A model that concentrates attention at a fixed
position would have $\mu_K$ growing linearly with $L$.  In
practice, even randomly initialized models produce
$\mu_K \approx 1.5$, because layer normalization equalizes
$\|x_j\|$ over positions and random $W_K$ introduces no positional
bias.  Training develops specialized attention patterns but
preserves this uniformity.  Table~\ref{tab:mu_K} confirms this for
all seven architecture families.

\begin{table}[ht]
\centering
\caption{Key incoherence $\mu_K$ across architecture families
(mean $\pm$ std across all heads at $L = 256$).}
\label{tab:mu_K}
\begin{tabular}{lcrrc}
\addlinespace[1.0em]
\toprule
Family & Models & Heads & $\mu_K$ (mean $\pm$ std) &
  $\mu_K \le 5$ (\%) \\
\midrule
Pythia  & 4 & 1{,}680 & $1.37 \pm 0.4$ & 100.0 \\
GPT-2   & 3 & 1{,}248 & $1.74 \pm 0.5$ & 99.6 \\
OPT     & 2 & 1{,}152 & $1.65 \pm 0.7$ & 99.1 \\
Qwen    & 3 &    176 & $1.48 \pm 0.2$ & 100.0 \\
LLaMA   & 2 &    352 & $1.49 \pm 0.2$ & 100.0 \\
Phi-2   & 1 & 1{,}024 & $1.45 \pm 0.3$ & 99.9 \\
Mistral & 1 &    256 & $1.50 \pm 0.2$ & 100.0 \\
\midrule
\textbf{All} & \textbf{16} & \textbf{5{,}888}
  & $\mathbf{1.5 \pm 0.4}$ & \textbf{99.8} \\
\bottomrule
\end{tabular}
\end{table}

OPT-350m is the sole outlier, with a few heads reaching
$\mu_K = 26$ at $L = 1{,}024$.  OPT was released as a ``warts and
all'' reproduction~\cite{zhang2022opt} with documented training
instabilities.  OPT-1.3B, from the same family, satisfies
$\mu_K < 6$.

\begin{remark}\label{rem:structural}
The row-sum identity and rank bound are mechanism-level, holding for
any weights and any input.  Key incoherence is empirical.  Whether training on non-language
tasks such as vision or protein modeling also preserves it remains
open (Section~\ref{sec:discussion}).
\end{remark}

\subsection{Cross-architecture universality}
\label{sec:universality}

Figure~\ref{fig:mu_K_vs_size} plots $\mu_K$ against model size for
all 16 models, spanning nearly two orders of magnitude in parameter
count (124M to 7B).  The key incoherence parameter shows no trend
with model size, remaining near $\mu_K \approx 1.5$ from GPT-2 
at 124M parameters to Mistral at 7B parameters.  
The interquartile ranges overlap for all architecture families.
Neither the position encoding scheme, the training corpus, 
nor the model depth shifts $\mu_K$ appreciably.

\begin{figure}[t]
  \centering
  \includegraphics[width=0.7\textwidth]{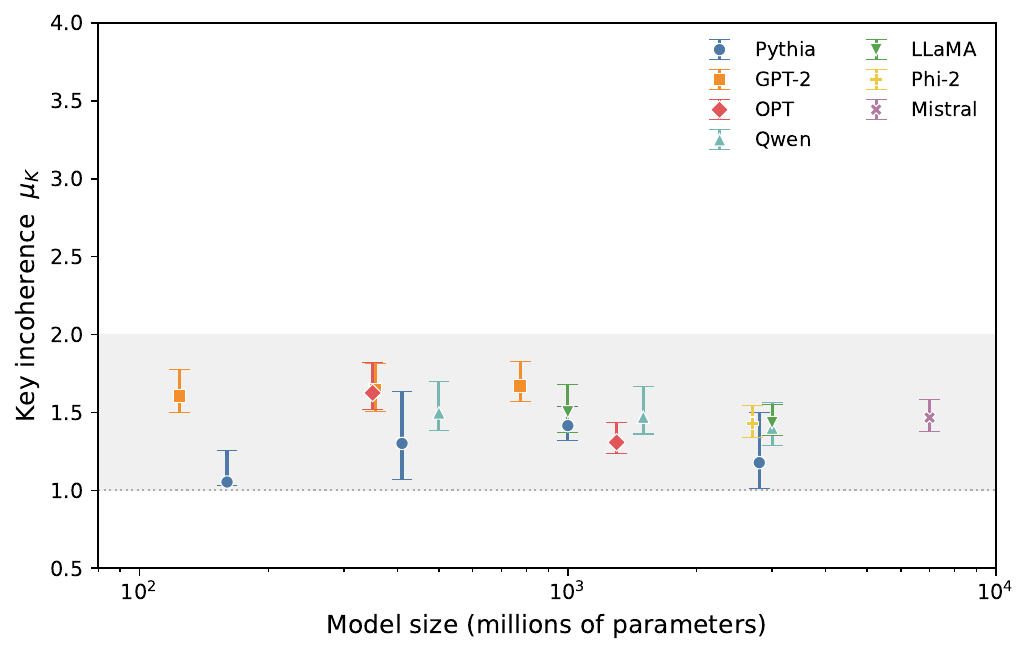}
  \caption{Key incoherence $\mu_K$ vs.\ model size for all 16 models
    at $L = 256$.  Markers show the median $\mu_K$ across all heads;
    error bars span the interquartile range.  Despite nearly two
    orders of magnitude in parameter count, $\mu_K$ shows no
    systematic trend and remains near 1.5 for every architecture
    family.  The gray dotted line marks $\mu_K = 1$, perfectly
    uniform key norms.}
  \label{fig:mu_K_vs_size}
\end{figure}

\section{Delocalization}
\label{sec:delocalization}

With $\mu_K = O(1)$ established in Section~\ref{sec:incoherence},
we can ask what it implies for the singular vectors of the energy
field.  They delocalize, spreading their mass across many positions
rather than concentrating at a few.  This section proves a bound on
the delocalization and verifies it empirically.

\begin{definition}[Inverse participation ratio]\label{def:ipr}
For a unit vector $v \in \R^L$, we define the \emph{inverse participation
ratio} as $\IPR(v) = \sum_{j=0}^{L-1} v_j^4$.  
\end{definition}

The rescaled product $\IPR(v) \cdot L$ is a dimensionless concentration 
index. It equals~$1$ for a perfectly uniform vector ($|v_j| = 1/\sqrt{L}$),
$L$ for a fully concentrated one ($v_1 = 1$ and the rest zero), and $3$
for a Gaussian random vector ($v_j \sim \mathcal{N}(0, 1/L)$).

\begin{lemma}[Delocalization bound]\label{lem:delocalization}
Let $K \in \R^{L \times d_h}$ be the key matrix of an attention
head, and let $\kappa(K) = \sigma_{\max}(K)/\sigma_{\min}(K)$
denote its condition number.  Under the assumptions
\begin{enumerate}[label=(A-\arabic*),nosep]
\item Scale: $\|K\|_F^2 / L = O(d_h)$
  \quad (automatic under layer normalization),
\item Incoherence: $\mu_K = O(1)$
  \quad (empirical; Section~\ref{sec:incoherence}),
\item Conditioning: $\kappa(K)$ bounded independently of $L$
  \quad (empirical; Remark~\ref{rem:kappa} below),
\end{enumerate}
each right singular vector $v_k$ of the row-centered logit matrix
$\tilde{E}$ satisfies
\begin{equation}\label{eq:delocalization}
\IPR(v_k) \cdot L
\;\le\; \frac{\mu_K \cdot d_h^2 \cdot \kappa^4}{L}
\;\xrightarrow{L \to \infty}\; 0.
\end{equation}
In particular, $\IPR(v_k) \cdot L = O(1)$ for any fixed $d_h$.
\end{lemma}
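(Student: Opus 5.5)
The plan is to exploit the factorization of $\tilde{E}$ to pin every right singular vector inside a $d_h$-dimensional subspace spanned by (centered) key columns. Then each entry of $v_k$ can be bounded by a single key norm, which $\mu_K$ controls.

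\textbf{Step 1 (subspace containment).} Write the row centering as right multiplication by the projector $C = I - \tfrac{1}{L}\mathbf{1}\mathbf{1}^\mathrm{T}$, so that $\tilde{E} = Q K^\mathrm{T} C/\sqrt{d_h} = Q (CK)^\mathrm{T}/\sqrt{d_h}$. Every right singular vector $v_k$ with $\sigma_k > 0$ lies in the row space of $\tilde{E}$. That row space is contained in the column space of $CK$. Hence $v_k = CK a$ for some $a \in \R^{d_h}$. The entries are then $(v_k)_j = (k_j - \bar{k})^\mathrm{T} a$, where $\bar{k}$ is the mean key.

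\textbf{Step 2 (controlling the coefficient vector).} Taking norms in $v_k = CKa$ gives $1 = \|v_k\| \ge \sigma_{\min}(CK)\,\|a\|$, so $\|a\| \le 1/\sigma_{\min}(CK)$. The main obstacle is the mismatch between $\sigma_{\min}(CK)$ and the $\sigma_{\min}(K)$ appearing in (A-3). Centering can destroy one direction, namely the mean key $\bar{k}$; this matters precisely in the attention-sink regime, where a near-constant key component exists.
\begin{itemize}[nosep]
\item \emph{First attempt.} Interpret (A-3) as applying to $CK$, i.e.\ the conditioning of the centered keys. This is the quantity one would actually measure.
\item \emph{Fallback.} If this fails, restrict to the orthogonal complement of the degenerate direction. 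One would then pay additional factors of $\kappa$ and $d_h$ in an interlacing argument. This is presumably how the cruder factor $d_h^2\kappa^4$ in \eqref{eq:delocalization} arises, since $d_h\kappa^2 \le d_h^2\kappa^4$.
\end{itemize}

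\textbf{Step 3 (entrywise bound and IPR).} By Cauchy--Schwarz,
\[
(v_k)_j^2 \le \frac{\|k_j - \bar{k}\|^2}{\sigma_{\min}(CK)^2} \le \frac{4\max_j\|k_j\|^2}{\sigma_{\min}^2}.
\]
Three facts now apply.
\begin{itemize}[nosep]
\item Definition~\ref{def:mu_K} gives $\max_j\|k_j\|^2 = \mu_K\|K\|_F^2/L$.
\item We have $\|K\|_F^2 \le d_h\,\sigma_{\max}^2$.
\item Since $\sum_j (v_k)_j^2 = 1$, we have $\IPR(v_k) \le \max_j (v_k)_j^2$.
\end{itemize}
Combining these yields $\IPR(v_k)\cdot L \lesssim \mu_K\, d_h\, \kappa^2$. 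Under (A-1)--(A-3) this is $O(1)$ for fixed $d_h$, which is the ``in particular'' conclusion of the lemma. (A-1) serves only to keep the scale of $\|K\|_F^2$ from entering separately.

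\textbf{A caveat on the displayed bound.} I do not expect to recover the extra $1/L$ factor in \eqref{eq:delocalization}, and the convergence to $0$ cannot hold as written. For any unit vector, Cauchy--Schwarz gives $1 = \sum_j v_j^2 \le \sqrt{L\,\IPR(v)}$, so $\IPR(v)\cdot L \ge 1$. What I would actually prove is therefore
\[
\IPR(v_k)\cdot L \le C\,\mu_K\,d_h^2\,\kappa^4 = O(1),
\]
or equivalently $\IPR(v_k) = O(1/L) \to 0$. I suspect the intended statement is this one.
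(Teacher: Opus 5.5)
Your caveat is correct: $\IPR(v)\cdot L\ge 1$ for every unit vector, so the displayed bound cannot hold. The paper's own remark that $\mathbf{1}/\sqrt{L}$, with $\IPR\cdot L=1$, ``satisfies the bound trivially'' already contradicts it. In the paper's proof the spurious $1/L$ is a bookkeeping slip. The paper reaches $\IPR(v)\le\max_j\|k_j\|^2\,\|K\|_F^2/\sigma_{\min}^4(K)$. Multiplying by $L$ and using $L\max_j\|k_j\|^2=\mu_K\|K\|_F^2$ gives $\IPR(v)\cdot L\le\mu_K\|K\|_F^4/\sigma_{\min}^4(K)\le\mu_K d_h^2\kappa^4$, which is exactly your corrected $O(1)$ statement. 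The factor $d_h^2\kappa^4$ comes from summing fourth powers, $\sum_j\|k_j\|^4\le\max_j\|k_j\|^2\,\|K\|_F^2$, and then squaring $\|K\|_F^2/\sigma_{\min}^2(K)\le d_h\kappa^2$. It has nothing to do with an interlacing step. Your use of $\IPR(v)\le\max_j v_j^2$ is the same Cauchy--Schwarz argument with less loss, giving $\mu_K d_h\kappa^2$. Your subspace $\operatorname{colspan}(CK)$ is also sharper than the paper's $\operatorname{colspan}(K)+\operatorname{span}(\mathbf{1})$.

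The genuine gap is Step 2, and your fallback cannot close it. With (A-1)--(A-3) imposed on $K$ itself, even the corrected $O(1)$ claim is false. Take $d_h=1$, $K=\mathbf{1}+\epsilon e_0$ with fixed $\epsilon>0$, and any nonzero $Q$. Then $\kappa(K)=1$, $\|K\|_F^2/L\to1$ and $\mu_K\to(1+\epsilon)^2$. Yet $CK=\epsilon\,Ce_0$, so $\tilde E=\epsilon\,Q(Ce_0)^{\mathrm T}$ has the single right singular vector $v=Ce_0/\|Ce_0\|$. Its entry $v_0^2=1-1/L$ forces $\IPR(v)\cdot L\ge(L-1)^2/L$. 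For any $d_h$, append $d_h-1$ mutually orthogonal $\pm1$ columns orthogonal to $\mathbf{1}$ and take all queries along the first coordinate; the same failure occurs.

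This is precisely the mechanism you flagged: centering annihilates a near-constant key direction and leaves a localized residue. Hence $\sigma_{\min}(CK)$ is not controlled by $\sigma_{\min}(K)$ or $\mu_K$. The paper's proof has the same hole in another guise. It bounds vectors in $\operatorname{colspan}(K)$ and the single vector $\mathbf{1}/\sqrt{L}$ separately, never their combinations, and that is where this example lives.

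You therefore have to commit to your first attempt, i.e.\ restate the hypotheses for the centered keys, and do so consistently. $\kappa(CK)$ alone does not bound the ratio $\|K\|_F^2/\sigma_{\min}^2(CK)$ that your Step 3 actually produces, because $\|K\|_F^2=\|CK\|_F^2+L\|\bar k\|^2$ is inflated by a large mean key. Instead, keep $\|k_j-\bar k\|^2$ rather than the lossy $4\max_j\|k_j\|^2$, and set $\mu_{CK}=L\max_j\|k_j-\bar k\|^2/\|CK\|_F^2$. Your argument then yields the correct statement $\IPR(v_k)\cdot L\le\mu_{CK}\,d_h\,\kappa(CK)^2$, which excludes the example above (there $\mu_{CK}=L-1$).
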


\begin{proof}
The row space of $Z = QK^\mathrm{T}/\sqrt{d_h}$ is a subspace of the
column space of $K$, so the right singular vectors of $Z$ lie in
$\operatorname{colspan}(K)$.  Row centering adds a rank-one
perturbation, so the right singular vectors of $\tilde{E}$ lie in
$\operatorname{colspan}(K) + \operatorname{span}(\mathbf{1})$.  We
prove the bound for any unit vector $v$ in
$\operatorname{colspan}(K)$.  The vector $\mathbf{1}/\sqrt{L}$ has $\IPR = 1/L$, so $\IPR \cdot L = 1$.
It is maximally delocalized and satisfies the bound trivially.

Write $v = Kw / \|Kw\|$ for some $w \in \R^{d_h}$.
Then $v_j = k_j^\mathrm{T} w / \|Kw\|$, so by Cauchy--Schwarz:
\[
v_j^2 \;\le\; \frac{\|k_j\|^2\,\|w\|^2}{\|Kw\|^2}.
\]
Squaring both sides: $v_j^4 \le \|k_j\|^4\,\|w\|^4 / \|Kw\|^4$.
Summing over $j$ and bounding each $\|k_j\|^2$ by
$\max_j \|k_j\|^2$:
\[
\IPR(v)
= \sum_j v_j^4
\le \frac{\max_j \|k_j\|^2 \cdot \|K\|_F^2}
         {\bigl(\|Kw\|^2/\|w\|^2\bigr)^2}.
\]
The denominator satisfies
$\|Kw\|^2/\|w\|^2 \ge \sigma_{\min}^2(K)$, so
$\IPR(v) \le \max_j\|k_j\|^2 \cdot \|K\|_F^2 / \sigma_{\min}^4(K)$.
Multiplying by $L$ and using $\max_j\|k_j\|^2 = \mu_K\|K\|_F^2/L$
(Definition~\ref{def:mu_K}):
\[
\IPR(v) \cdot L
\;\le\; \mu_K \cdot \frac{\|K\|_F^4}{L \cdot \sigma_{\min}^4(K)}.
\]
(A-1) ensures $\|K\|_F^2 = O(L\,d_h)$, keeping the bound
finite.  SVD of $K$ gives $\|K\|_F^2 = \sum_{k=1}^{d_h} \sigma_k^2(K)$.
With (A-3),
we get $\sigma_{\min}^2(K) \ge \|K\|_F^2 / (d_h\,\kappa^2)$, so
$\sigma_{\min}^4(K) \ge \|K\|_F^4 / (d_h^2\,\kappa^4)$.
Substituting:
\[
\IPR(v) \cdot L
\;\le\; \mu_K \cdot
  \frac{\|K\|_F^4}{L \cdot \|K\|_F^4 / (d_h^2\,\kappa^4)}
\;=\; \frac{\mu_K \cdot d_h^2 \cdot \kappa^4}{L}.
\qedhere
\]
\end{proof}

\begin{remark}[Conditioning assumption]
\label{rem:kappa}
(A-3) requires that $K$ uses all $d_h$ dimensions of the key
space,  so that $\sigma_{\min}(K) > 0$.  
Empirically, the median $\kappa(K)$ ranges from 20 to 570
at $L = 256$--$1{,}024$ and \emph{decreases} with context length.
For example, GPT-2 has median $\kappa = 49$ at $L = 256$ and $24$
at $L = 1{,}024$.  At short contexts ($L < d_h$), the key matrix
is rank-deficient and $\kappa$ is unbounded.  The condition number
is not $O(1)$ in the strict sense, but it does not grow with $L$,
which is what the bound requires.  A well-trained model uses all
dimensions of its key space.  Gradients repurpose any that carry
no information.
\end{remark}

Key incoherence and bounded conditioning together force
$\IPR(v_k) \cdot L \to 0$.  We verify this in 16 models at five context lengths and five
texts (Table~\ref{tab:ipr}).  In all 400 measurements,
$\IPR \cdot L$ ranges from 2.85 to 3.63, close to
the Gaussian random vector baseline of~3.  Within each model, the
cross-text CV is below 2\%.

Non-RoPE architectures (GPT-2, OPT) cluster near
$\IPR \cdot L \approx 3.4$, while RoPE-based families cluster
near 3.0.  Rotary embeddings~\cite{su2024roformer} spread key
vectors more uniformly on the $d_h$-dimensional sphere,
producing slightly more delocalized singular vectors.  The value
decreases with context length, from 3.27 at $L = 64$ to 3.04 at
$L = 1{,}024$, consistent with the bound.

\begin{table}[ht]
\centering
\caption{Delocalization ($\IPR \cdot L$) of $\tilde{E}$ at
$L = 512$.  RoPE-based architectures show systematically lower
values than non-RoPE models.}
\label{tab:ipr}
\begin{tabular}{lccc}
\\
\toprule
Family & Models & $\IPR \cdot L$ (mean) & CV (\%) \\
\midrule
GPT-2   & 3 & $3.46 \pm 0.15$ & 4.2 \\
OPT     & 2 & $3.42 \pm 0.10$ & 2.8 \\
\midrule
Pythia  & 4 & $3.05 \pm 0.08$ & 2.7 \\
Qwen    & 3 & $2.95 \pm 0.04$ & 1.3 \\
LLaMA   & 2 & $3.10 \pm 0.07$ & 2.2 \\
Phi-2   & 1 & $3.04$ & --- \\
Mistral & 1 & $3.25$ & --- \\
\midrule
\textbf{All (16 models)} & \textbf{16}
  & $\mathbf{3.17 \pm 0.21}$ & \textbf{6.7} \\
\bottomrule
\end{tabular}
\end{table}

\section{Spectral Signatures}
\label{sec:signatures}

The row-sum identity constrains the spectral structure of the
flattened signal $\mathcal{E}_t$ (Definition~\ref{def:flatten}).
Correlations between energy values at different positions and
scales are invisible in the matrix but exposed by spectral
decomposition.  The row-sum produces an exact autocovariance
bridge when squared.  The DC component also vanishes, a direct
consequence of the zero mean.  Both properties are
mechanism-level and hold for any model.  We verify them using the
discrete wavelet transform to analyze the non-stationary
flattened signal.                                                                        

\subsection{The discrete wavelet transform}\label{sec:dwt}

The flattened signal concatenates rows of increasing length and is
non-stationary by construction.  We use the discrete wavelet
transform~\cite{mallat2009wavelet} (DWT) with a Daubechies-4
wavelet in periodization mode, which handles non-stationarity and
conserves energy exactly (Parseval's theorem):
\begin{equation}\label{eq:parseval}
\sum_t \mathcal{E}_t^2
= \sum_n a_J[n]^2
  + \sum_{j=1}^{J} \sum_n d_j[n]^2,
\end{equation}
where $a_J[n]$ are approximation coefficients at the coarsest
level and $d_j[n]$ are detail coefficients at scale $2^j$.  The
\emph{approximation fraction}
$\rho = \sum_n a_J[n]^2 / \sum_t \mathcal{E}_t^2$ measures the
DC content, and the \emph{wavelet energy density}
$\mathcal{W}(j) = (1/N_j)\sum_n d_j[n]^2$ characterizes the
multi-scale correlation structure.  The flattened signal has zero
mean by construction. The wavelet approximation fraction
$\rho < 0.004$ in all 400 measurements confirms this
(Figure~\ref{fig:spectral}(a)).

\begin{figure}[!t]
  \centering
  \includegraphics[width=\textwidth]{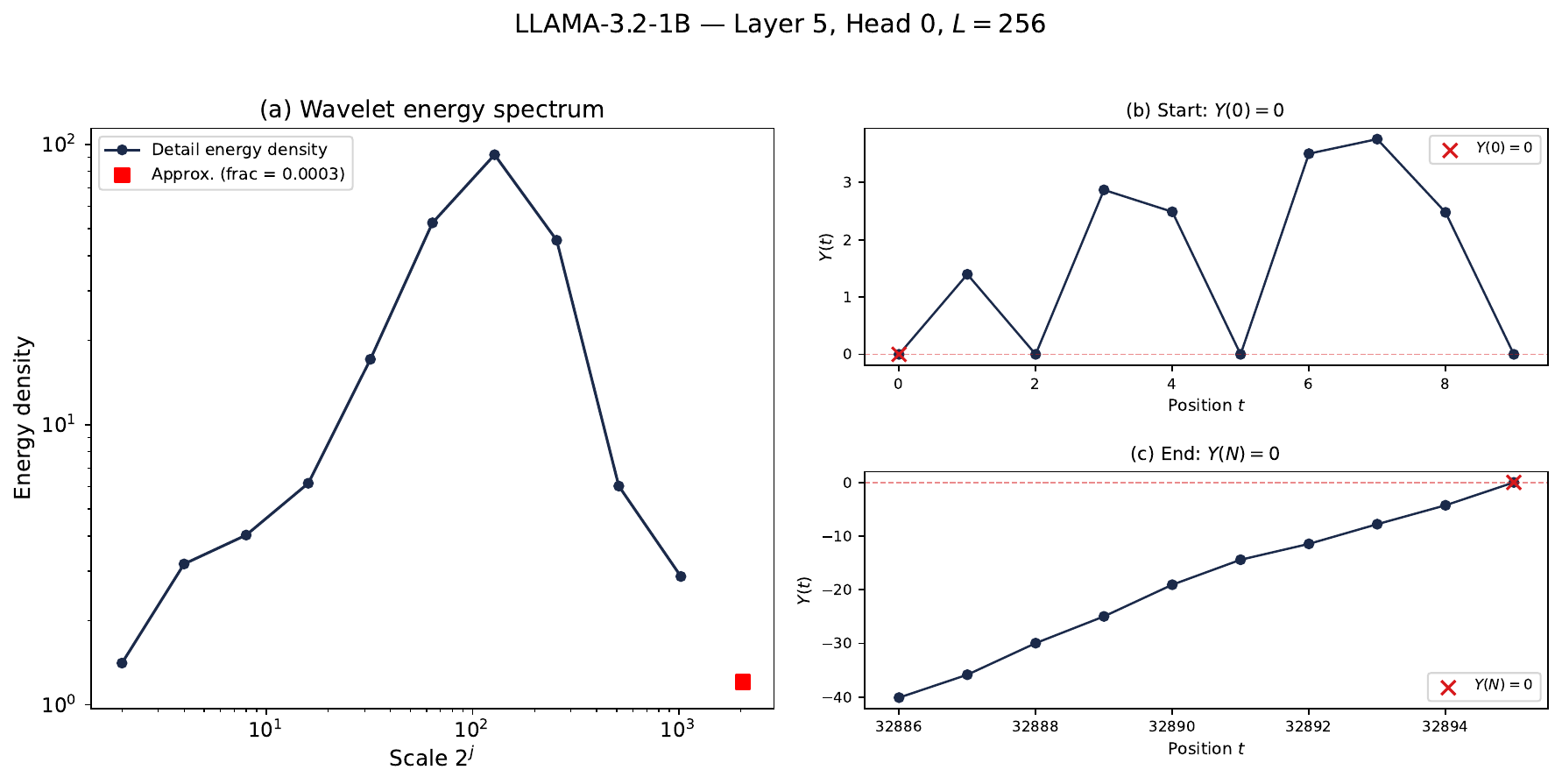}
  \caption{Spectral signatures in the flattened energy signal
    of LLaMA-3.2-1B, layer~5, head~0, $L = 256$.
    \textbf{(a)}~Wavelet energy spectrum.  Energy density increases
    with scale in the detail subbands.  The coarsest approximation
    (red square) carries $< 0.004$ of total energy, verifying the
    vanishing DC property.
    \textbf{(b, c)}~First and last 10 values of the cumulative
    signal $Y(t) = \sum_{s=1}^{t} \mathcal{E}_s$.  The signal
    starts at $Y(0) = 0$ and returns to $Y(N) = 0$, a bridge
    pinned at both ends.  This is a direct consequence of the
    row-sum identity $\sum_t \mathcal{E}_t = 0$.}
  \label{fig:spectral}
\end{figure}

\subsection{The autocovariance bridge}\label{sec:bridge}

The flattened signal pairs entries from the same row, called
\emph{within-row}, or from different rows, called
\emph{cross-row}.  For each row~$i$, the
\emph{per-row autocovariance} is
$S_i(\tau) = \sum_{j=0}^{n_i-1-\tau} E_{ij}\,E_{i,j+\tau}$.
The \emph{within-row autocovariance} is
$W(\tau) = \sum_i S_i(\tau)$, summed over rows with $n_i > \tau$,
and the \emph{cross-row autocovariance} is the remainder
$X(\tau) = N\hat{\gamma}(\tau) - W(\tau)$.

\begin{proposition}[Bridge identity]\label{prop:bridge}
The row-sum identity implies
\begin{equation}\label{eq:bridge}
\sum_{\tau \ge 0} W(\tau) = \frac{N\hat{\gamma}(0)}{2},
\qquad
\sum_{\tau=0}^{N-1} \hat{\gamma}(\tau) = \frac{\hat{\gamma}(0)}{2}.
\end{equation}
\end{proposition}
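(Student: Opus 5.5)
The plan is to square a zero sum and read off its pairwise (lag) structure. I will apply this once to each row, which gives the first identity, and once to the whole flattened signal, which gives the second.

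\emph{Within-row identity.} Fix a row $i$ with $n_i \ge 1$ and expand $0 = \bigl(\sum_{j=0}^{n_i-1} E_{ij}\bigr)^2$ using the row-sum identity of Definition~\ref{def:energy}. Group the terms of the double sum $\sum_{j,j'} E_{ij}E_{ij'}$ by lag $\tau = |j-j'|$. The diagonal ($\tau=0$) gives $S_i(0)$. Each lag $\tau\ge1$ appears twice, once for $j<j'$ and once for $j>j'$, giving $2S_i(\tau)$. Hence
\[
0 = S_i(0) + 2\sum_{\tau\ge1} S_i(\tau),
\qquad\text{so}\qquad
\sum_{\tau\ge0} S_i(\tau) = \tfrac12 S_i(0).
\]
Summing over rows gives $\sum_{\tau\ge0} W(\tau) = \tfrac12 W(0)$. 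Since $S_i(\tau)$ vanishes for $\tau\ge n_i$, the interchange of sums is over finitely many terms.

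It remains to identify $W(0)$ with $N\hat\gamma(0)$. At lag zero there are no cross-row pairs, so $W(0) = \sum_i\sum_j E_{ij}^2$. Row~$0$ is the single entry $E_{00}=0$, so $W(0) = \sum_t \mathcal{E}_t^2 = N\hat\gamma(0)$ by \eqref{eq:channel-cov}. This bookkeeping step is the one place needing care. Row~$0$ is excluded from the flattened signal but is harmless in $W$ because its only entry is zero. The convention ``rows with $n_i>\tau$'' likewise only drops zero terms.

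\emph{Global identity.} Apply the same argument to the whole flattened signal, using $\sum_{t=1}^N \mathcal{E}_t = 0$ from Definition~\ref{def:flatten}. Squaring and grouping by lag gives
\[
0 = \sum_t \mathcal{E}_t^2 + 2\sum_{\tau=1}^{N-1}\sum_{t=1}^{N-\tau}\mathcal{E}_t\mathcal{E}_{t+\tau} = N\hat\gamma(0) + 2N\sum_{\tau=1}^{N-1}\hat\gamma(\tau).
\]
Dividing by $N$ and adding $\hat\gamma(0)$ to both sides yields $\sum_{\tau=0}^{N-1}\hat\gamma(\tau) = \hat\gamma(0)/2$.

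I do not expect a genuine obstacle. The only subtleties are the row-$0$ and empty-row conventions noted above, and the observation that the global identity is in fact implied by the per-row identities (since they force the global sum to vanish). One could add as a consequence $\sum_\tau X(\tau) = 0$ for $\tau\ge1$ combined with $\tau=0$, but this is not needed for the statement.
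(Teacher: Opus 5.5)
Your proof is correct and follows the paper's argument essentially line for line. You square each row sum and group by lag to get $S_i(0)+2\sum_{\tau\ge1}S_i(\tau)=0$, sum over rows with $W(0)=N\hat{\gamma}(0)$, then repeat the squaring on the global sum $\sum_t \mathcal{E}_t=0$. Your explicit handling of row~$0$ (whose only entry $E_{00}=0$ is absent from the flattened signal) and of the vanishing terms for $\tau\ge n_i$ is bookkeeping the paper leaves implicit.
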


\begin{proof}
For row~$i$ with $n_i$ entries, the row-sum constraint gives
$\sum_{j=0}^{n_i-1} E_{ij} = 0$.  Squaring both sides,
\[
0 = \Bigl(\sum_{j=0}^{n_i-1} E_{ij}\Bigr)^2
  = \sum_{j}\sum_{j'} E_{ij}\,E_{ij'}.
\]
Grouping the double sum by lag $\tau = j' - j$, the $\tau = 0$
terms give $\sum_j E_{ij}^2 = S_i(0)$, and each nonzero lag
$\tau$ appears twice (once for $j' - j = \tau$, once for
$j - j' = \tau$), giving
\[
S_i(0) + 2\sum_{\tau=1}^{n_i-1} S_i(\tau) = 0.
\]
Summing over all rows~$i$,
\[
W(0) + 2\sum_{\tau \ge 1} W(\tau) = 0.
\]
Since $W(0) = \sum_i S_i(0) = \sum_t \mathcal{E}_t^2 =
N\hat{\gamma}(0)$, rearranging gives the first identity
$\sum_{\tau \ge 0} W(\tau) = N\hat{\gamma}(0)/2$.

For the second identity, apply the same argument to the global sum
$\sum_{t=1}^{N} \mathcal{E}_t = 0$.  Squaring,
\[
0 = \sum_{t=1}^{N}\sum_{t'=1}^{N}
    \mathcal{E}_t\,\mathcal{E}_{t'}.
\]
Grouping by lag $\tau = t' - t$ and using symmetry
$\hat{\gamma}(-\tau) = \hat{\gamma}(\tau)$,
\[
0 = N\hat{\gamma}(0) + 2\sum_{\tau=1}^{N-1} N\hat{\gamma}(\tau).
\]
Dividing by $N$ gives
$\hat{\gamma}(0) + 2\sum_{\tau=1}^{N-1} \hat{\gamma}(\tau) = 0$,
hence
$\sum_{\tau=1}^{N-1} \hat{\gamma}(\tau) = -\hat{\gamma}(0)/2$.
Adding $\hat{\gamma}(0)$ to include the $\tau = 0$ term,
$\sum_{\tau=0}^{N-1} \hat{\gamma}(\tau) = \hat{\gamma}(0)/2$.
\end{proof}

The cumulative signal $Y(t) = \sum_{s=1}^{t} \mathcal{E}_s$ starts
at zero and returns to zero, a \emph{bridge} pinned at both ends
(Figure~\ref{fig:spectral}(b, c)).  We verify the bridge ratio
$\sum W(\tau) / [N\hat{\gamma}(0)/2] = 1.000000$ to six decimal
places in all 400 measurements.

The bridge identity means that within-row correlations account for
exactly half the total energy, and cross-row correlations account
for the other half.  This even split holds regardless of the
model, the input, or the context length.  It is a structural
constraint on how attention organizes information, imposed by the
row-sum identity alone.

This analysis is tractable because the autocovariance decomposes
into at most $r^2 \le (d_h + 1)^2$ channel interactions
(Eq.~\eqref{eq:channel-cov}), a bound determined by head
dimension, not context length.  The wavelet transform reveals the multi-scale correlation
structure and verifies the bridge identity.

\section{Implications}
\label{sec:tools}

\subsection{Intrinsic dimensionality of the energy field}

The rank bound ($\rank(\tilde{E}) \le d_h + 1$) establishes that
the energy field lives in a low-dimensional subspace.  Trained
models use far less of this subspace than the bound allows.

The reconstruction fidelity
$F_r = 1 - \|M - M_r\|_F^2 / \|M\|_F^2$ measures the fraction of
total variance captured by a rank-$r$ approximation.
Table~\ref{tab:fidelity} compares SVD on the row-centered logit
matrix $\tilde{E}$ and the causal field $E$ against a top-$k$
sparsification baseline that keeps the $k$ largest entries per row.

\begin{table}[!h]
\centering
\caption{Reconstruction fidelity $F_r$ for three methods
(mean across GPT-2, Pythia-410M, LLaMA-3.2-1B at $L = 256$).
SVD dominates at every rank, confirming that the energy field
is low-rank, not sparse.}
\label{tab:fidelity}
\begin{tabular}{rccc}
\addlinespace[0.5em]
\toprule
$r$ & SVD($\tilde{E}$) & SVD($E$) & Top-$k$ \\
\midrule
5  & \textbf{0.89} & 0.76 & 0.25 \\
10 & \textbf{0.94} & 0.85 & 0.38 \\
20 & \textbf{0.98} & 0.92 & 0.54 \\
\bottomrule
\end{tabular}
\end{table}

At $r = 20$, SVD on $\tilde{E}$ captures 98\% of variance while
top-$k$ captures only 54\%.  The energy field is low-rank, not
sparse.  SVD on $\tilde{E}$ outperforms SVD on $E$ because the
rank bound applies to $\tilde{E}$ directly.  The causal field $E$
differs by row-dependent shifts
(Remark~\ref{rem:E-vs-Etilde}), which spread variance beyond
the rank-bounded subspace.

The intrinsic dimensionality of the energy field is $\sim$20 out
of the $d_h + 1 = 65$ dimensions the rank bound allows.  Trained
models concentrate their attention structure in far fewer
components than the mechanism permits.  
Whether this low-rank structure can be exploited at inference
time is open, because softmax amplifies reconstruction errors at
peak-attention positions.  A companion
paper~\cite{lee2026csal} proves that the attention error under
rank-$r$ truncation is bounded by the delocalization constant and
the tail singular values, providing a first bridge between the
structural findings here and inference-time fidelity.

\subsection{$\mu_K$ as a training monitor}

A head with $\mu_K \gg 1$ concentrates its key norms at one or a
few positions, collapsing attention toward fixed targets.  Monitoring $\mu_K$ during training
can detect this pathology early.

The computation is negligible.  At each logging step, for each head,
compute $\mu_K = L \cdot \max_j \|k_j\|^2 / \sum_j \|k_j\|^2$
from the key vectors.  The additional cost is one \texttt{max} and
one \texttt{sum}.

For 5{,}888 heads in 16 models, 99.8\% have
$\mu_K \le 5$ (Table~\ref{tab:mu_K}).  We suggest $\mu_K = 5$ as a
warning threshold.  The threshold is derived from post-hoc analysis
of trained models.  Validating it as a live training monitor remains
future work.

\section{Discussion}
\label{sec:discussion}

\subsection{Experimental scope and limitations}
\label{sec:methodology}

We test 16 models from seven architecture families:
Pythia~\cite{biderman2023pythia} (160M, 410M, 1B, 2.8B),
GPT-2~\cite{radford2019language} (124M, 355M, 774M),
OPT~\cite{zhang2022opt} (350M, 1.3B),
Qwen-2.5~\cite{yang2024qwen2} (0.5B, 1.5B, 3B),
LLaMA-3.2~\cite{touvron2023llama} (1B, 3B),
Phi-2~\cite{javaheripi2023phi} (2.7B), and
Mistral-7B~\cite{jiang2023mistral} (7B).
Five of the seven families use rotary position
embeddings (RoPE)~\cite{su2024roformer} and three use grouped query
attention (GQA)~\cite{ainslie2023gqa}, where fewer key--value heads
serve multiple query heads.
Models up to 3B run in float32; Mistral-7B runs in float16.

Input texts are five excerpts ($\sim$3{,}000 words each) from
Project Gutenberg: Dickens (\emph{A Tale of Two Cities}), Darwin
(\emph{On the Origin of Species}), Shakespeare (\emph{Hamlet}), the
King James Bible, and Adam Smith (\emph{The Wealth of Nations}).
The DWT pipeline is validated against synthetic ground-truth
tests, including Parseval conservation to machine precision.  All
reported quantities have CV $< 10$\% over the five texts, with
$\IPR \cdot L$ below 2\% within each model.

The scope has clear boundaries.  All models are autoregressive
language models trained on English-dominant corpora.  The largest
model (Mistral, 7B) is small by current standards; models at 70B
and above may exhibit different conditioning or attention
specialization.  Context lengths
reach $L = 1{,}024$, well below the 32K--128K windows of modern
long-context models.  The mechanism-level results (row-sum, rank bound, bridge
identity) hold by construction and are
not affected by these limitations.  The model-level results
($\mu_K = O(1)$, delocalization) are empirical claims whose
generality depends on the breadth of models tested.

\subsection{Open problems}
\label{sec:open}

The connection between $\mu_K$ and the coherence parameter in
matrix completion~\cite{candes2009exact}, noted in
Section~\ref{sec:related}, suggests that tools from random matrix
theory may apply to the energy field.

\begin{enumerate}[leftmargin=*]

\item \emph{Why does training preserve $\mu_K = O(1)$?}  Random initialization establishes incoherence.  Training develops
specialized attention patterns while maintaining it.  Deriving this preservation
from gradient descent dynamics would unify the mechanism-level and
model-level results.

\item \emph{Can the delocalization bound be tightened?}  The current
bound establishes the correct asymptotic behavior
($\IPR \cdot L \to 0$) but is quantitatively loose.  A tighter
analysis exploiting the full spectrum of $K$ would narrow the gap
between the bound and the empirical range of $2.85$--$3.63$.

\item \emph{Does $\mu_K = O(1)$ hold beyond language?}  Scaling to
70B+ models, testing at $L = 32$K--$128$K, and measuring $\mu_K$ on
vision transformers, protein models, and reinforcement learning
agents would determine whether key incoherence is specific to
natural language or general to softmax attention.

\item \emph{Can the low intrinsic dimensionality be exploited?}  The energy field concentrates most of its variance in far fewer
than $d_h + 1$ components, but the softmax nonlinearity amplifies
reconstruction errors at peak-attention positions.  Bridging this gap requires
methods that account for the interaction between low-rank
approximation and softmax.

\end{enumerate}

\section{Conclusion}
\label{sec:conclusion}

The energy field $E_{ij} = Z_{ij} - \mu_i$ is a complete,
invertible representation of softmax attention.  Two algebraic
constraints govern it.  Each row sums to zero, and the matrix has
rank at most $d_h + 1$.

The central finding is key incoherence.  Every trained language model 
we test maintains $\mu_K \approx 1.5$, keeping key norms spread over positions
despite developing specialized attention patterns.
This regularity holds in 16 models spanning seven architecture
families, from 124M to 7B parameters.  It implies that the
singular vectors of the energy field delocalize, and that the
field concentrates its variance in far fewer components than the
mechanism allows.

The energy field is more structured than the mechanism requires
and more uniform than one might expect.  Understanding why training 
preserves this uniformity remains open.

\bibliographystyle{plain}
\bibliography{oisa_ref}

\appendix
\section{Appendix: The energy field as a centered log-ratio transform}
\label{app:clr}

The arithmetic mean used to define the energy field
(Definition~\ref{def:energy}) is the unique centering that
corresponds to the geometric mean of the attention probabilities,
the natural center of a distribution on the probability simplex.

The geometric mean of the attention probabilities in row~$i$ is
$\bar{p}_i^{(G)} = \bigl(\prod_{j'} p_{ij'}\bigr)^{1/L}$.
Since $\log p_{ij} = Z_{ij} - \log \mathcal{Z}_i$, the
log-geometric-mean is
\[
\log \bar{p}_i^{(G)}
= \frac{1}{L}\sum_{j'} \log p_{ij'}
= \frac{1}{L}\sum_{j'} Z_{ij'} - \log \mathcal{Z}_i
= \bar{\mu}_i - \log \mathcal{Z}_i.
\]
The log-ratio of each probability to this geometric mean is
\[
\log \frac{p_{ij}}{\bar{p}_i^{(G)}}
= (Z_{ij} - \log \mathcal{Z}_i) - (\bar{\mu}_i - \log \mathcal{Z}_i)
= Z_{ij} - \bar{\mu}_i
= \tilde{E}_{ij}.
\]
The partition function $\mathcal{Z}_i$ cancels exactly.  The
energy field is the log-ratio of each attention probability to the
geometric mean of its row.  Positive energy means a position
receives more attention than the geometric average.  The geometric
mean, not the arithmetic mean, is the correct reference because
softmax maps logits through an exponential.

This is the \emph{centered log-ratio} (CLR) transform from
compositional data analysis~\cite{aitchison1986statistical}.
Attention probabilities are compositions.  They are nonnegative
and sum to one.  The CLR transform maps the probability simplex to
$\R^L$ with two properties.  First, the components sum to zero
(the row-sum identity of Definition~\ref{def:energy}).  Second,
the Euclidean inner product in CLR coordinates equals the
Aitchison inner product on the simplex, the natural metric for
compositional data.

By the Eckart--Young theorem, SVD minimizes the Frobenius-norm
reconstruction error at any rank.  Since the Frobenius norm in CLR
coordinates equals the Aitchison distance on the simplex, the SVD
of the energy field (Section~\ref{sec:svd}) gives the optimal
low-rank approximation of the attention pattern in the Aitchison
geometry, not merely in Euclidean distance.  The
rank bound, key incoherence, and delocalization results all hold
in the coordinate system that respects the compositional structure
of attention.

Alternative centerings fail.  Any nonlinear function of the
logits breaks the row-sum identity
$\sum_j (Z_{ij} - \bar{Z}_i) = 0$, which underpins the spectral
results.  Any centering of the probabilities other than the
geometric mean fails to cancel the partition function.  The
arithmetic mean of logits is the only centering that is both
linear in $Z$ and meaningful on the probability simplex.

The arithmetic mean of logits is the unique centering that is both
linear in $Z$ (preserving the algebraic structure) and equivalent
to the geometric mean of probabilities (respecting the
compositional geometry of the simplex).

\end{document}